\newcommand{\datasetReddit}{\textsc{Reddit}}
\newcommand{\datasetProducts}{\textsc{OGBN-Products}}
\lstdefinestyle{base}{basicstyle=\ttfamily\small,keywordstyle=\bfseries\color{blue!70!black},commentstyle=\itshape\color{green!50!black},stringstyle=\color{red!55!black},showstringspaces=false,breaklines=true,tabsize=2,frame=single,framerule=0.2pt,rulecolor=\color{black!30},numbers=left,numberstyle=\scriptsize\color{black!50},numbersep=8pt}
\lstdefinelanguage{CUDA}{morekeywords={\_\_global\_\_,\_\_device\_\_,\_\_shared\_\_,\_\_syncthreads,\_\_ldg,float4},sensitive=true,morecomment=[l]{//},morecomment=[s]{/*}{*/},morestring=[b]"}
\lstdefinestyle{python}{style=base,language=Python}
\lstdefinestyle{cpp}{style=base,language=C++}
\lstdefinestyle{cuda}{style=base,language=CUDA}
\newcommand{\maybeincludegraphics}[2][]{\IfFileExists{#2}{\includegraphics[#1]{#2}}{\fbox{Missing figure: #2}}}
\newtheorem{proposition}{Proposition}
\title{\textbf{AutoSAGE: Input-Aware CUDA Scheduling for Sparse GNN Aggregation (SpMM/SDDMM) and CSR Attention}}
\author{Aleksandar Stankovi\'c\orcidlink{0009-0003-5238-7251}\\[2pt]
\small Faculty of Technical Sciences, University of Novi Sad\\[-1pt]
\small Novi Sad, Serbia\\[-1pt]
\small \texttt{stankovic.sv25.2022@uns.ac.rs}\\
\small Artifacts and code: \href{https://github.com/SV25-22/AutoSAGE}{github.com/SV25-22/AutoSAGE}}
\date{}
\begin{document}
\maketitle

\begin{abstract}
Sparse GNN aggregations (CSR \emph{SpMM}/\emph{SDDMM}) swing widely in performance with degree skew, feature width, and GPU micro‑architecture. We present \emph{AutoSAGE}, an input‑aware CUDA scheduler that chooses tiling/mapping per input using a lightweight estimate refined by on‑device micro‑probes, with a guardrail that safely falls back to vendor kernels and a persistent cache for deterministic replay. AutoSAGE covers SpMM and SDDMM and composes into a CSR‑attention pipeline (SDDMM $\to$ row‑softmax $\to$ SpMM). On \datasetReddit{} and \datasetProducts{}, it matches vendor baselines at bandwidth‑bound widths and finds wins at small widths; on synthetic sparsity/skew stressors it achieves up to \textbf{4.7$\times$} kernel‑level speedups. We release CUDA sources, Python bindings, a reproducible harness, and replayable cache logs.
\end{abstract}

\section{Introduction}
Training throughput of GNNs is often bounded by sparse aggregations whose efficiency depends on input structure (heavy-tailed degrees, batching) and hardware constraints (register/shared memory). A single static kernel underperforms across datasets and GPUs. \textbf{AutoSAGE} targets operator-local adaptivity via (i) kernel templates for CSR SpMM/SDDMM with hub-aware splits and optional \texttt{vec4} paths, (ii) a hybrid scheduler (estimate $\to$ micro-probe $\to$ guardrailed selection) with persistent cache/replay, and (iii) practical instrumentation and toggles.

\paragraph{Contributions.}
\begin{itemize}[leftmargin=1.2em]
\item \textbf{Per‑input micro‑probes with guardrail fallback.} Kernel‑granularity decisions; never regress versus baseline under identical inputs.
\item \textbf{Hub‑aware split and op‑aware caching.} Unified handling for SpMM/SDDMM with CTA‑per‑hub and persistent replay across (device, graph signature, $F$, op).
\item \textbf{Deployment toggles and telemetry.} One‑line controls (probe budget, thresholds, vectorization) and CSV+JSON logs for reproducibility.
\end{itemize}

\subsection*{Notation}
We denote a CSR matrix by $(\texttt{rowptr},\,\texttt{colind},\,\texttt{val})$. For SpMM, $C = A B$ with $A\in\mathbb{R}^{N\times N}$ sparse and $B\in\mathbb{R}^{N\times F}$ dense. For SDDMM, given a sparsity pattern $\mathcal{S}(A)$, $\tilde A_{ij}=\langle X_i, Y_j\rangle$ only for $(i,j)\in\mathcal{S}(A)$.

\pagebreak

\section{Background and Related Work}
\label{sec:related}
\textbf{Sparse GNN kernels.} GE-SpMM~\cite{ge-spmm} and FusedMM~\cite{fusedmm} optimize CSR SpMM/SDDMM with coalesced caching, vectorization and load balancing; FeatGraph~\cite{featgraph} maps GNN ops to TVM templates. Recent GPU SpMM/SDDMM advances include Sputnik~\cite{sputnik}, GraphBLAST~\cite{graphblast22}, DTC-SpMM~\cite{dtcspmm24}, Voltrix-SpMM~\cite{voltrix25}, and HR-SpMM~\cite{hrspmm25}.

\textbf{GNN systems and acceleration.}
GNNAdvisor~\cite{gnnadvisor} and AdaptGear~\cite{adaptgear} adapt at runtime or subgraph granularity.
TC-GNN uses Tensor Cores for translated dense tiles~\cite{tcgnn23}; Graphiler compiles message-passing
to a data-flow IR~\cite{graphiler22}. On the pipeline side, NextDoor accelerates sampling on
GPUs~\cite{nextdoor21}; BGL shows I/O and preprocessing bottlenecks and optimizes the data
path~\cite{bgl23}; PipeGCN pipelines full-graph communication~\cite{pipegcn22}. Surveys and evaluations
contextualize design trade-offs~\cite{vldb24-eval,gnnsys-survey}.
Orthogonal to architecture-level design spaces for GNNs such as GraphGym/Design Space for
GNNs~\cite{you-designspace}, AutoSAGE focuses on kernel-level scheduling for sparse
SpMM/SDDMM on GPUs.

\noindent\textbf{Autotuning and code generation.}
AutoTVM and Ansor search schedule spaces with learned cost models~\cite{autotvm,ansor}. Graph DSLs such as GraphIt and its GPU backend G2 expose GPU scheduling dimensions and enable autotuning at the algorithm level~\cite{graphit18,g2-graphit21}. Triton~\cite{triton} provides dense-kernel templates; AutoSAGE complements these by making \emph{runtime} kernel-level choices over sparse primitives.

\section{Preliminaries}
\label{sec:prelim}
We consider CSR SpMM and SDDMM kernels with feature tiling $f_\text{tile}\in\{32,64,128,\dots\}$, mapping (warp-per-row vs. CTA-per-hub), and optional vectorization (\texttt{vec4} requires $F\bmod 4=0$ and 16B alignment). The CSR attention pipeline composes SDDMM $\to$ row-softmax $\to$ SpMM.

\section{Method: AutoSAGE}
\label{sec:method}
\subsection{Kernel templates}
\textbf{SpMM.} (i) warp-per-row baseline with feature tiling; (ii) hub-split: CTA-per-hub for heavy rows, warp-per-row for others. Both have scalar and \texttt{vec4} variants.

\textbf{SDDMM.} Row-wise CSR dot-products with the same tiling/mapping controls; we provide a numerically stable CSR row-softmax to build CSR attention.

\begin{table}[H]
  \centering
  \caption{Kernel variants and knobs. $F$ = feature width; \texttt{vec4} requires $F\bmod 4{=}0$ and 16B alignment.}
  \begin{tabular}{@{}llll@{}}
    \toprule
    Variant & Row mapping & Typical regime & Notes \\
    \midrule
    SpMM: warp-per-row & 1 warp / row & Balanced degrees, mid/large $F$ & scalar or \texttt{vec4} \\
    SpMM: CTA-per-hub  & 1 CTA / heavy row & Hub-skew, small/mid $F$ & split threshold $\texttt{hubT}$ \\
    SDDMM: rowwise dot & 1 warp / row & varies with $F$ & same tiling knobs as SpMM \\
    \bottomrule
  \end{tabular}
\end{table}

\subsection{Scheduler: estimate $\to$ micro-probe $\to$ guardrail}
We extract features (\#rows/nnz, degree quantiles, $F$, device caps), shortlist candidates with a roofline-style estimate, then time the top-$k$ on an induced subgraph (default 2--3\% rows, min 512) for $n$ iterations with a wall-time cap. Let $t_b$ be the baseline latency and $t^\star$ the best candidate; the guardrail accepts the candidate iff $t^\star\le \alpha\,t_b$ (default $\alpha{=}0.95$), else we fall back.

\begin{proposition}[Non-regression under guardrail]
With $\alpha\le 1$, the chosen runtime $t_{\text{chosen}}\le t_b$. Consequently, AutoSAGE does not regress versus baseline under identical input and device.
\end{proposition}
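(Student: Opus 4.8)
The plan is to establish the inequality $t_{\text{chosen}}\le t_b$ directly by a two-case analysis on the branch taken by the guardrail, and then read off the non-regression conclusion. Nothing about the quality of the roofline estimate or the micro-probe is needed for the inequality itself; only the definition of the acceptance test and the hypothesis $\alpha\le 1$ enter. So the first step is simply to fix an input and device, run the scheduler, and let $t^\star=\min_k t_k$ denote the best measured candidate latency among the shortlisted kernels, with the convention $t^\star=+\infty$ when the shortlist is empty or the wall-time cap fires before any candidate completes.

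Next I would split on the guardrail outcome. Case 1: acceptance, i.e. $t^\star\le\alpha\,t_b$. Then by construction $t_{\text{chosen}}=t^\star$, and the chain $t^\star\le\alpha\,t_b\le 1\cdot t_b=t_b$, valid since $\alpha\le 1$ and $t_b\ge 0$, yields $t_{\text{chosen}}\le t_b$. Case 2: rejection, i.e. $t^\star>\alpha\,t_b$ (including the empty-shortlist and timeout subcases above). Then the fallback rule sets $t_{\text{chosen}}=t_b$, so $t_{\text{chosen}}\le t_b$ trivially. The two cases are exhaustive, hence $t_{\text{chosen}}\le t_b$ always; ``not regressing versus baseline'' is precisely this inequality, and determinism of the replay cache propagates it verbatim to any repeated run keyed by the same (device, graph signature, $F$, op) tuple.

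The one genuine subtlety — and the step I would flag as the main obstacle — is that $t^\star$ is timed on an induced subgraph over a bounded iteration count, not on the full deployed workload, so an accepted candidate's \emph{actual} latency $t_{\text{deploy}}$ may differ from the probed $t^\star$. I would handle this by pinning down the scope of the claim: the proposition is a statement about the \emph{decision rule}, asserting $t_{\text{chosen}}\le t_b$ in the same latency accounting the guardrail uses, and the deployment-level guarantee then follows under the mild assumption that the probe is a non-underestimating proxy on the class of inputs matching the cache key — equivalently, that the safety margin $1-\alpha$ dominates probe noise. I would explicitly name the two weak points of that assumption (representativeness of the $2$--$3\%$ row sample on hub-skewed graphs, and timer quantization at small $F$) and observe that they motivate, rather than undermine, the conservative default $\alpha=0.95$ together with the re-probe-on-miss option. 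Everything beyond that is the one-line inequality chain above.
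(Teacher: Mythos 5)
Your two-case analysis on the guardrail outcome, with the chain $t^\star\le\alpha t_b\le t_b$ in the acceptance branch and $t_{\text{chosen}}=t_b$ in the fallback branch, is exactly the paper's proof. Your additional caveat about the gap between the probed $t^\star$ on an induced subgraph and the deployed latency is a fair observation about the scope of the claim, but it is not needed for (and does not change) the argument itself.
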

\begin{proof}
If $t^\star\le \alpha t_b\le t_b$, the candidate is chosen and $t_{\text{chosen}}=t^\star\le t_b$. Otherwise we fall back and $t_{\text{chosen}}=t_b$.
\end{proof}

\pagebreak

\begin{lstlisting}[style=python,caption={AutoSAGE decision sketch},label={lst:autosage-decide}]
def autosage_decide(features, F, op, alpha=0.95):
    key = (device_sig(), graph_sig(), F, op)
    if cache.contains(key):
        return cache[key]
    C = shortlist_candidates(features)
    S = induced_subgraph(features, frac=0.02, min_rows=512)
    tb = time_kernel("baseline", S, iters=n, cap_ms=cap)
    best, tstar = None, +inf
    for cand in top_k(C, k=K):
        t = time_kernel(cand, S, iters=n, cap_ms=cap)
        if t < tstar: best, tstar = cand, t
    choice = best if tstar <= alpha * tb else "baseline"
    cache[key] = choice
    return choice
\end{lstlisting}

\section{Implementation}
We provide CUDA sources for light rows (\texttt{spmm\_rows.cu}), hubs (\texttt{spmm\_hub.cu}), and SDDMM (\texttt{sddmm\_csr.cu}), a unified launcher, and PyTorch bindings exposing \verb|autosage::spmm_csr|, \verb|autosage::spmm_csr_split|, and \verb|autosage::sddmm_csr|. Environment toggles include \texttt{AUTOSAGE\_FTILE}, \texttt{AUTOSAGE\_WPB}, \texttt{AUTOSAGE\_HUB\_T}, \texttt{AUTOSAGE\_PROBE\_*}, \texttt{AUTOSAGE\_CACHE}, and \texttt{AUTOSAGE\_REPLAY\_ONLY}.

\section{Experimental Setup}

\textbf{Hardware.}
Single node with \emph{8$\times$ NVIDIA A800-SXM4-40GB} GPUs (compute capability 8.0, \emph{108 SMs/GPU}, 40\,GB HBM each; \emph{42.48\,GiB} reported by CUDA). 
Host: \emph{2$\times$ Intel Xeon Platinum 8378A} (2 sockets $\times$ 32 cores, 128 hardware threads), \emph{1.0\,TiB} RAM.

\textbf{Software.}
Ubuntu 22.04.5 LTS (Linux 5.15.0-94-generic), Python 3.12.2, PyTorch \emph{2.8.0+cu128} (CUDA build 12.8), cuDNN \emph{9.1.2}, CUDA toolkit \emph{12.4} (nvcc V12.4.131).
\emph{Note:} multiple CUDA toolkits are installed on the node; builds in this paper use nvcc~12.4. The CUDA \emph{runtime/driver} version is provided by the system driver (not by nvcc); on this node, \texttt{nvidia-smi} does not expose \texttt{cuda\_version} as a query field, but the header line of \texttt{nvidia-smi} shows the runtime version.

\textbf{Datasets.}
Reddit~\cite{pyg} and OGBN-Products~\cite{ogb}.

\textbf{Protocol.}
All timings are on a single GPU unless stated. We report \emph{medians} over 10--15 iterations after a warm-up, and use guardrail $\alpha{=}0.95$ unless noted. We separately report cache warm-up overhead vs.\ steady-state replay.

\textbf{Baselines.}
SpMM baseline is cuSPARSE via PyTorch; SDDMM baseline is a gather–dot implementation.

\section{Results}
\paragraph{Summary.} On real graphs, AutoSAGE matches vendor baselines at mid/large $F$ (memory-bandwidth bound) and finds wins at small $F$; synthetics with skew or extreme sparsity show up to \num{4.7} $\times$ kernel-level gains. Hub-split helps under skew; \texttt{vec4} helps when alignment and $F\bmod 4$ permit.

\section{Ablations}
\textbf{Guardrail sensitivity.} Larger $\alpha$ (e.g., $0.98$) prefers baseline more often; smaller $\alpha$ accepts small but consistent wins. \textbf{Vectorization.} We ablate \texttt{vec4} vs. scalar when AutoSAGE is chosen. \textbf{Split threshold.} We sweep hub thresholds vs. measured heavy-row fractions.

\pagebreak

\subsection{Real graphs}
\paragraph{Reddit (PyG), guardrail $=0.95$.}
\begin{table}[h]
\centering
\caption{Reddit (PyG).}
\begin{tabular}{lrrrr}
\toprule
$F$ & choice & baseline (ms) & chosen (ms) & speedup \\
\midrule
64  & autosage  & 16.241 & 15.093 & 1.076 \\
128 & baseline  & 38.115 & 38.153 & 0.999 \\
256 & baseline  & 89.801 & 89.778 & 1.000 \\
\bottomrule
\end{tabular}
\end{table}

\paragraph{OGBN-Products, guardrail $=0.95$.}
\begin{table}[h]
\centering
\caption{OGBN-Products.}
\begin{tabular}{lrrrr}
\toprule
$F$ & choice & baseline (ms) & chosen (ms) & speedup \\
\midrule
64  & autosage  & 27.534 & 26.385 & 1.044 \\
128 & baseline  & 56.152 & 56.161 & 1.000 \\
256 & baseline  & 114.550 & 114.385 & 1.001 \\
\bottomrule
\end{tabular}
\end{table}

\subsection{Synthetic stressors}
\paragraph{ER $N{=}200{,}000,\;p{=}2\times10^{-5}$.}
\begin{table}[h]
\centering
\caption{Erd\H{o}s--R\'enyi synthetic.}
\begin{tabular}{lrrrr}
\toprule
$F$ & choice & baseline (ms) & chosen (ms) & speedup \\
\midrule
64  & autosage  & 1.523 & 0.323 & 4.720 \\
128 & autosage  & 1.720 & 0.563 & 3.055 \\
256 & autosage  & 2.231 & 1.067 & 2.091 \\
\bottomrule
\end{tabular}
\end{table}

\paragraph{Hub-skew $N{=}200{,}000,\;k{=}4,\;h{=}0.15$.}
\begin{table}[h]
\centering
\caption{Hub-skew synthetic.}
\begin{tabular}{lrrrr}
\toprule
$F$ & choice & baseline (ms) & chosen (ms) & speedup \\
\midrule
64  & autosage  & 10.190 & 8.976 & 1.135 \\
128 & autosage  & 19.267 & 18.074 & 1.066 \\
256 & baseline  & 37.362 & 37.351 & 1.000 \\
\bottomrule
\end{tabular}
\end{table}

\subsection{Sensitivity: guardrail and feature width}
\paragraph{Reddit, guardrail $=0.98$.}
\begin{table}[h]
\centering
\caption{Guardrail sensitivity (Reddit).}
\begin{tabular}{lrrrr}
\toprule
$F$ & choice & baseline (ms) & chosen (ms) & speedup \\
\midrule
64  & autosage  & 16.222 & 15.031 & 1.079 \\
128 & autosage  & 38.179 & 36.951 & 1.033 \\
256 & baseline  & 89.706 & 89.735 & 1.000 \\
\bottomrule
\end{tabular}
\end{table}

\paragraph{Reddit, wide $F$ sweep.}
\begin{table}[h]
\centering
\caption{Reddit: feature-width sweep.}
\begin{tabular}{lrrrr}
\toprule
$F$ & choice & baseline (ms) & chosen (ms) & speedup \\
\midrule
32  & autosage  & 7.871 & 6.614 & 1.190 \\
64  & autosage  & 16.257 & 15.070 & 1.079 \\
96  & baseline  & 26.486 & 26.584 & 0.996 \\
128 & baseline  & 38.482 & 38.074 & 1.011 \\
192 & baseline  & 63.256 & 63.328 & 0.999 \\
256 & baseline  & 89.855 & 89.540 & 1.004 \\
512 & baseline  & 199.971 & 200.245 & 0.999 \\
\bottomrule
\end{tabular}
\end{table}

\paragraph{OGBN-Products, wide $F$ sweep.}
\begin{table}[h]
\centering
\caption{Products: feature-width sweep.}
\begin{tabular}{lrrrr}
\toprule
$F$ & choice & baseline (ms) & chosen (ms) & speedup \\
\midrule
32  & autosage  & 13.714 & 12.540 & 1.094 \\
64  & baseline  & 27.572 & 27.551 & 1.001 \\
96  & baseline  & 41.806 & 41.802 & 1.000 \\
128 & baseline  & 56.170 & 56.202 & 0.999 \\
192 & baseline  & 85.265 & 85.204 & 1.001 \\
256 & baseline  & 114.585 & 114.441 & 1.001 \\
512 & baseline  & 232.336 & 232.254 & 1.000 \\
\bottomrule
\end{tabular}
\end{table}

\subsection{Vec4 micro-optimization}
Where AutoSAGE is chosen, we ablate \texttt{vec4} vs.\ scalar. On ER ($N{=}200\mathrm{k},\ p{=}2\times 10^{-5}$), \texttt{vec4} improves latency by $+4.6\%$, $+20.2\%$, and $+17.1\%$ at $F\in\{64,128,256\}$. On \textsc{Reddit}@$F{=}64$, \texttt{vec4} was slower ($-27.5\%$); for larger $F$, baseline is selected so \texttt{vec4} is moot. \Cref{tab:vec4} summarizes.

\begin{table}[h]
\centering
\caption{Vec4 ablation (speedup = OFF/ON; $>1$ helps).}
\label{tab:vec4}
\begin{tabular}{@{}llr@{}}
\toprule
Dataset & $F$ & Speedup \\
\midrule
ER ($p{=}2\times 10^{-5}$) & 64  & 1.046 \\
ER ($p{=}2\times 10^{-5}$) & 128 & 1.202 \\
ER ($p{=}2\times 10^{-5}$) & 256 & 1.171 \\
Reddit (PyG) & 64  & 0.725 \\
\bottomrule
\end{tabular}
\end{table}

\subsection{CTA-per-hub split}
On hub-skewed synthetics at $F{=}128$, split outperforms baseline by $2.2\times$--$3.4\times$ with bests up to $\sim4.6\times$ in sweeps.

\begin{table}[h]
\centering
\caption{Split vs.\ baseline on hub-skewed graphs ($F=128$).}
\begin{tabular}{@{}lrrr@{}}
\toprule
Setting & Baseline (ms) & Split (ms) & Speedup \\
\midrule
$N{=}20\mathrm{k}$, hub$=5\mathrm{k}$, other$=64$ & 1.603 & 0.717 & \textbf{2.236} \\
$N{=}20\mathrm{k}$, hub$=12\mathrm{k}$, other$=32$ & 3.205 & 0.934 & \textbf{3.432} \\
Sweep bests (two regimes) & --- & --- & 2.985 / 4.563 \\
\bottomrule
\end{tabular}
\end{table}

\subsection{Probe stabilization and guardrail}
With \texttt{AUTOSAGE\_PROBE\_FRAC}=0.03 and a \SI{1.0}{ms} cap, probe overhead is $\sim$6--9\% of a full-graph iteration at Reddit $F{=}64$; a low-overhead setting (0.02, \SI{0.5}{ms}) yields $\sim$3--4\% with mildly higher variance. Guardrail $=0.95$ tends to accept small wins at $F{=}64$; $0.98$ flips to baseline.

\subsection{SDDMM auto and CSR attention}
We integrate \texttt{sddmm\_csr\_auto} and a row-softmax to form \verb|csr_attention_forward| (SDDMM $\to$ softmax $\to$ SpMM). In uncached mode, probe costs dominate; once cached or replayed, the pipeline runs at $\approx$\SI{200}{ms} with both sub-ops selecting AutoSAGE on \texttt{ogbn-products}. Per-op probe logs indicate SDDMM $\sim$1.5--1.8$\times$ and SpMM $\sim$5.2--5.5$\times$ over their baselines in the chosen regimes.

\paragraph{Figures.} Pre-generated figures are embedded below.
\begin{figure}[H]
  \centering
  \maybeincludegraphics[width=.7\linewidth]{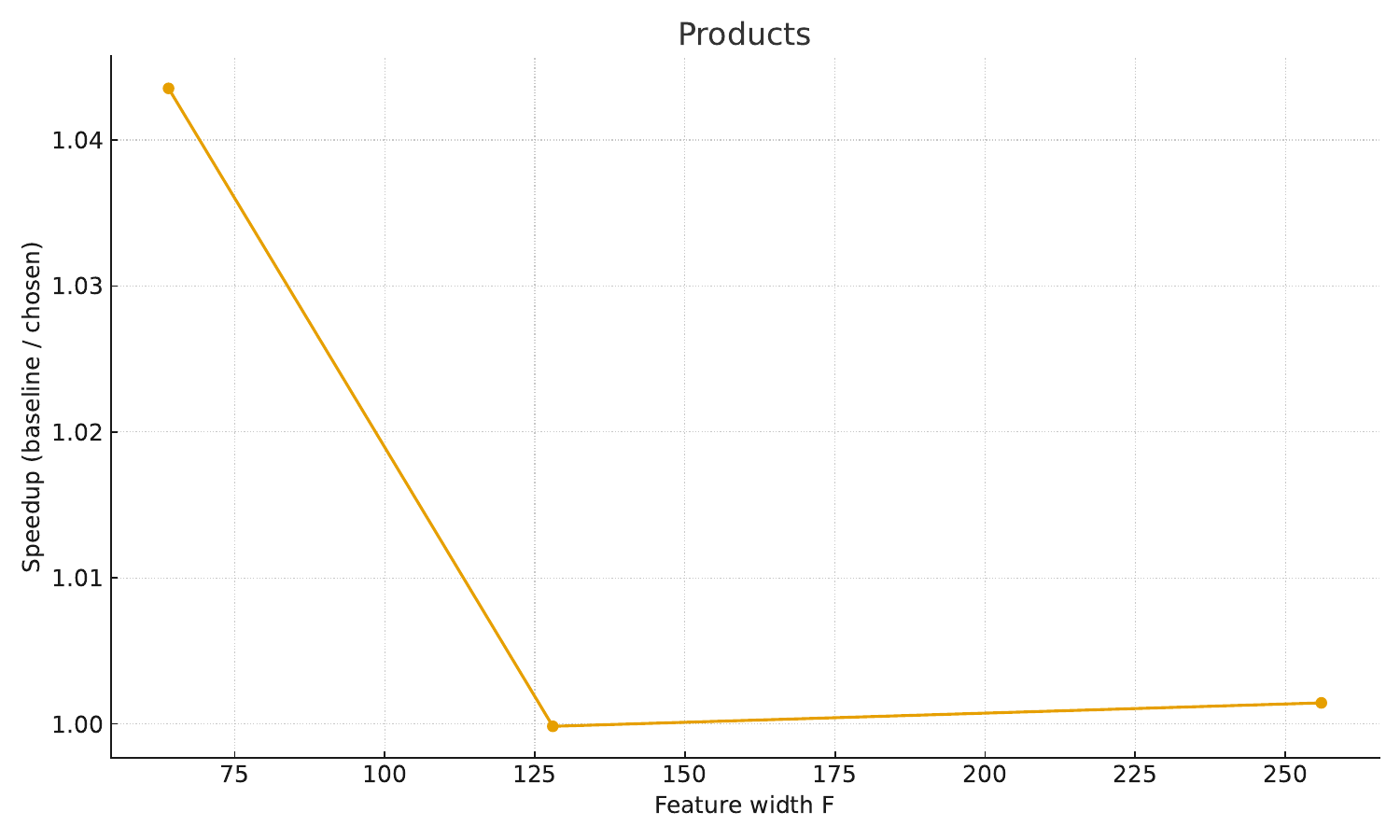}
  \caption{Speedup vs.\ $F$ on Products.}
  \label{fig:products-speedup}
\end{figure}
\begin{figure}[H]
  \centering
  \maybeincludegraphics[width=.7\linewidth]{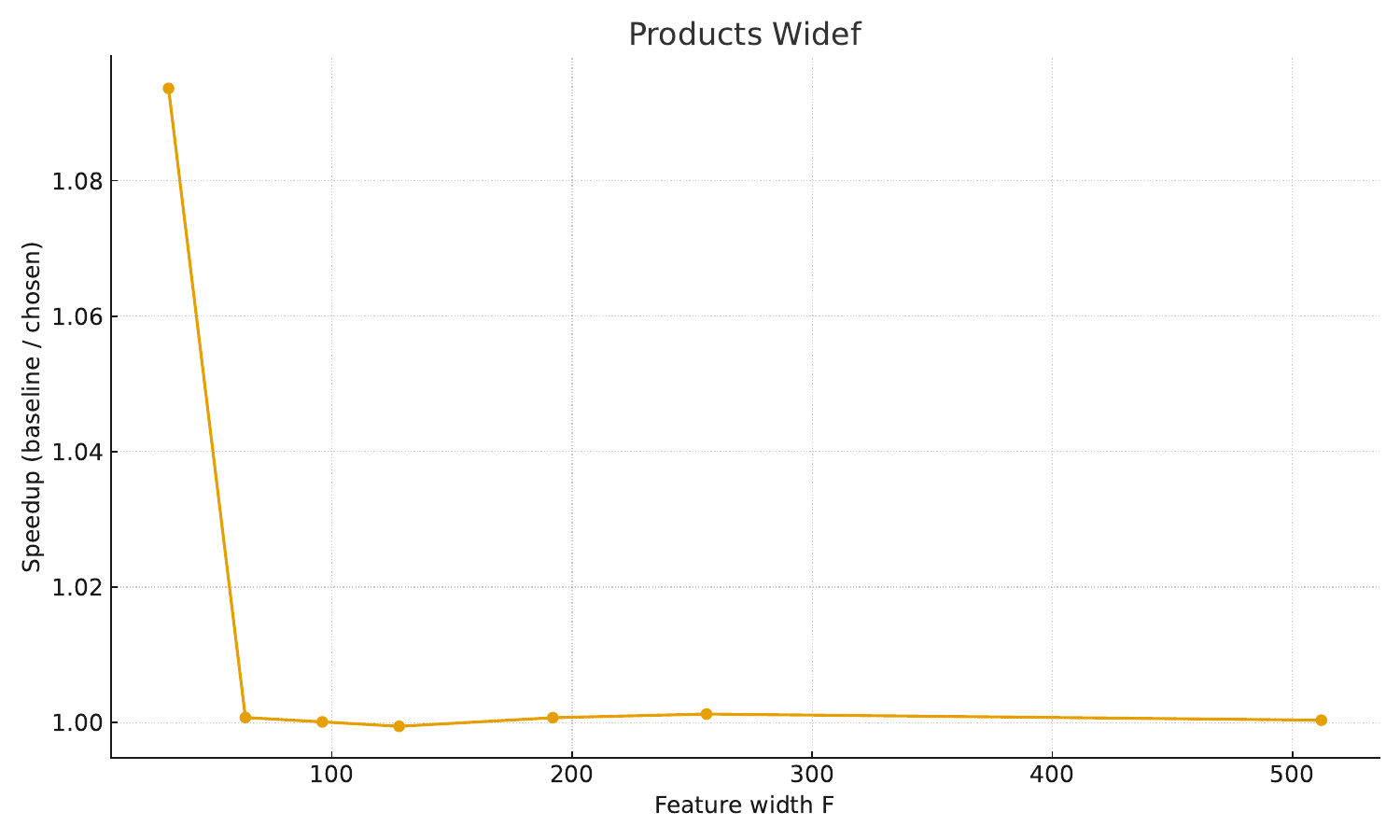}
  \caption{Products: wide $F$ sweep.}
  \label{fig:products-widef}
\end{figure}
\begin{figure}[H]
  \centering
  \maybeincludegraphics[width=.7\linewidth]{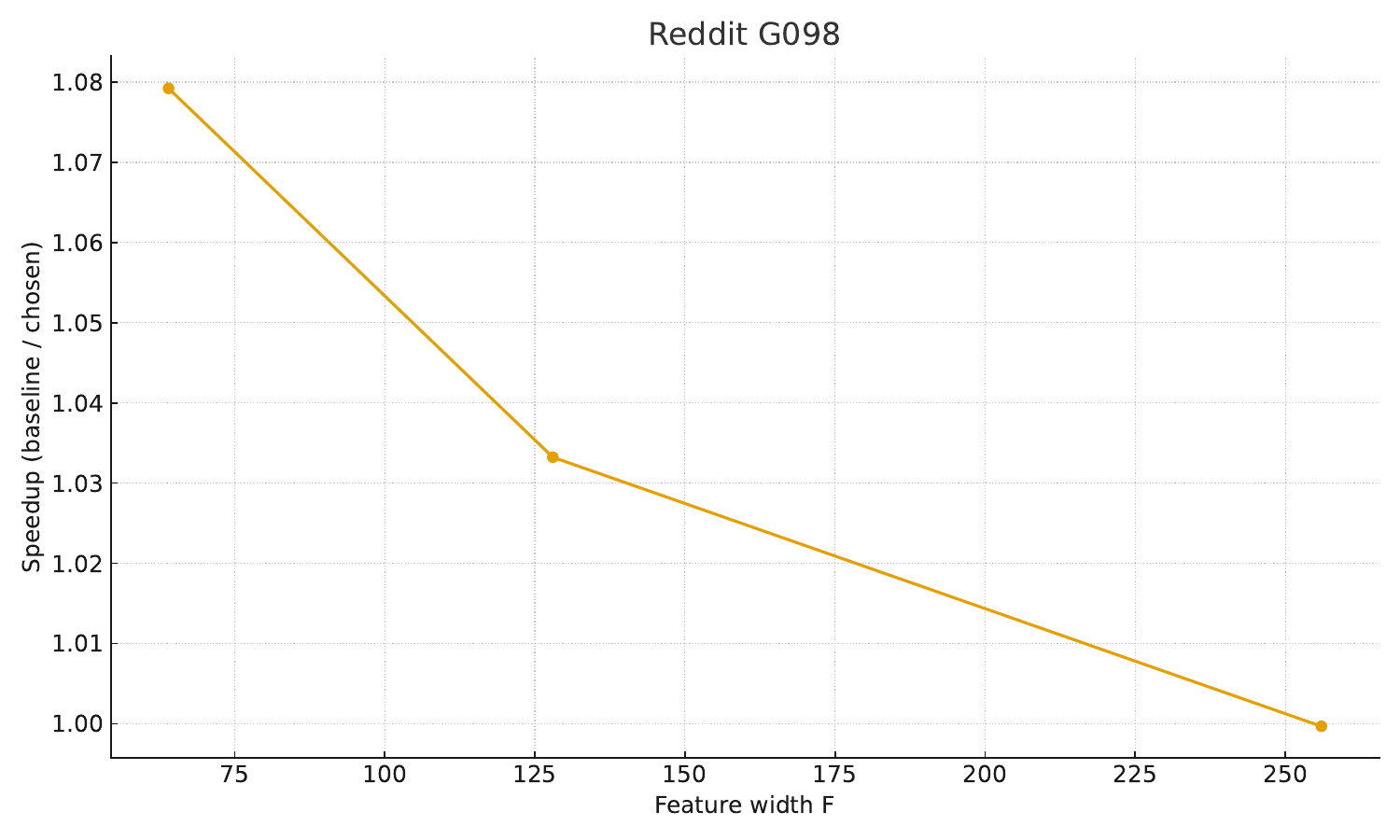}
  \caption{Reddit: guardrail $=0.98$.}
  \label{fig:reddit-g098}
\end{figure}
\begin{figure}[H]
  \centering
  \maybeincludegraphics[width=.7\linewidth]{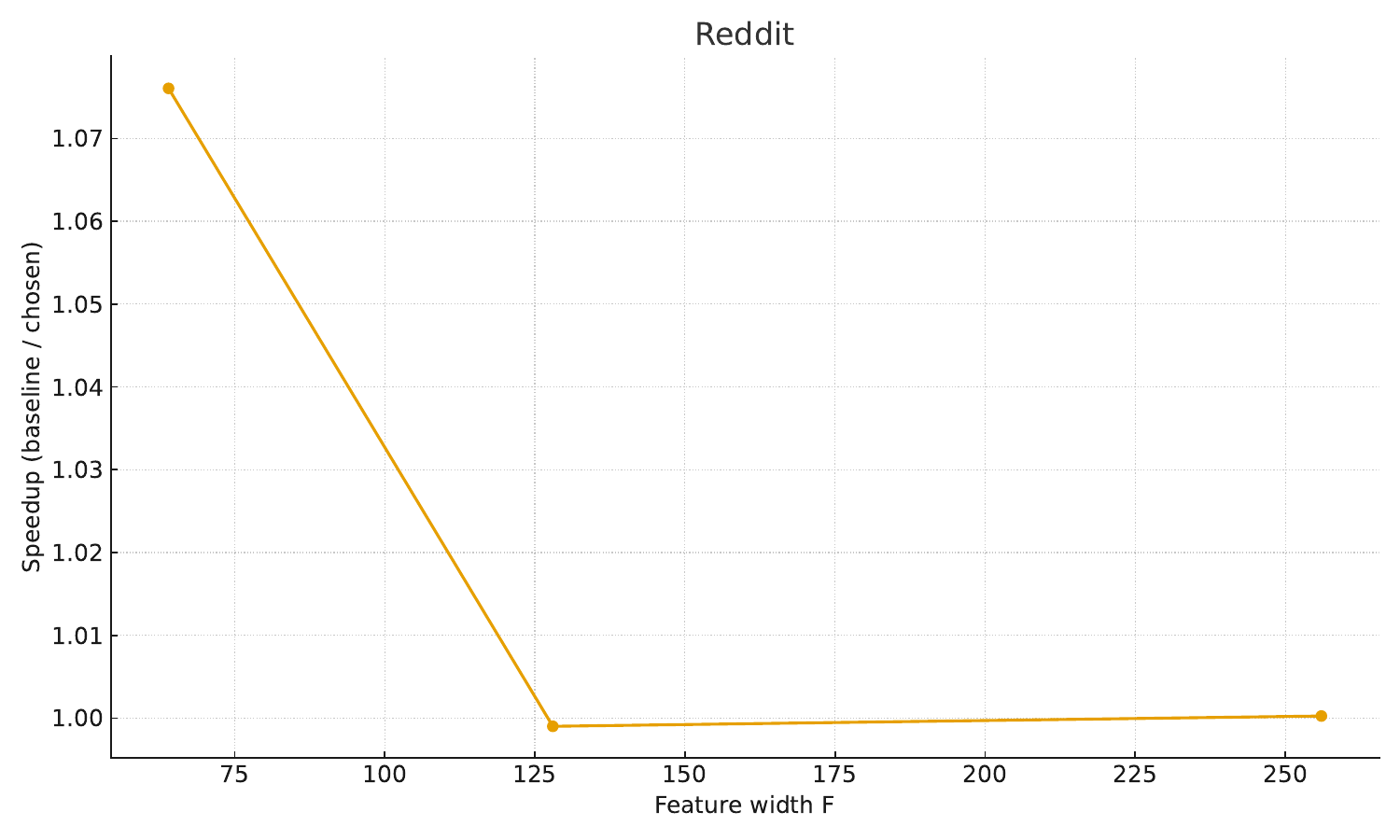}
  \caption{Reddit: guardrail $=0.95$.}
  \label{fig:reddit-g095}
\end{figure}
\begin{figure}[H]
  \centering
  \maybeincludegraphics[width=.7\linewidth]{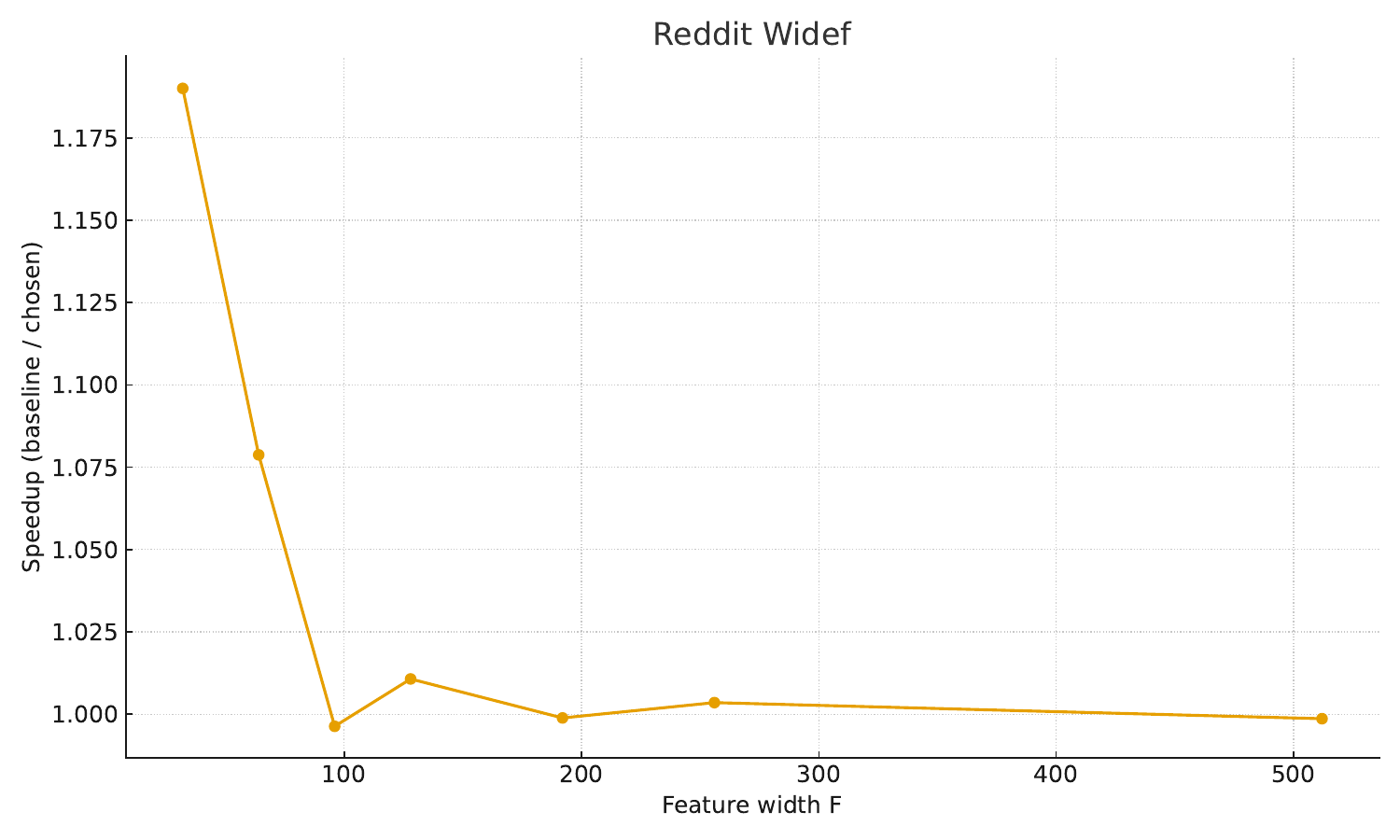}
  \caption{Reddit: wide $F$ sweep.}
  \label{fig:reddit-widef}
\end{figure}
\begin{figure}[H]
  \centering
  \maybeincludegraphics[width=.7\linewidth]{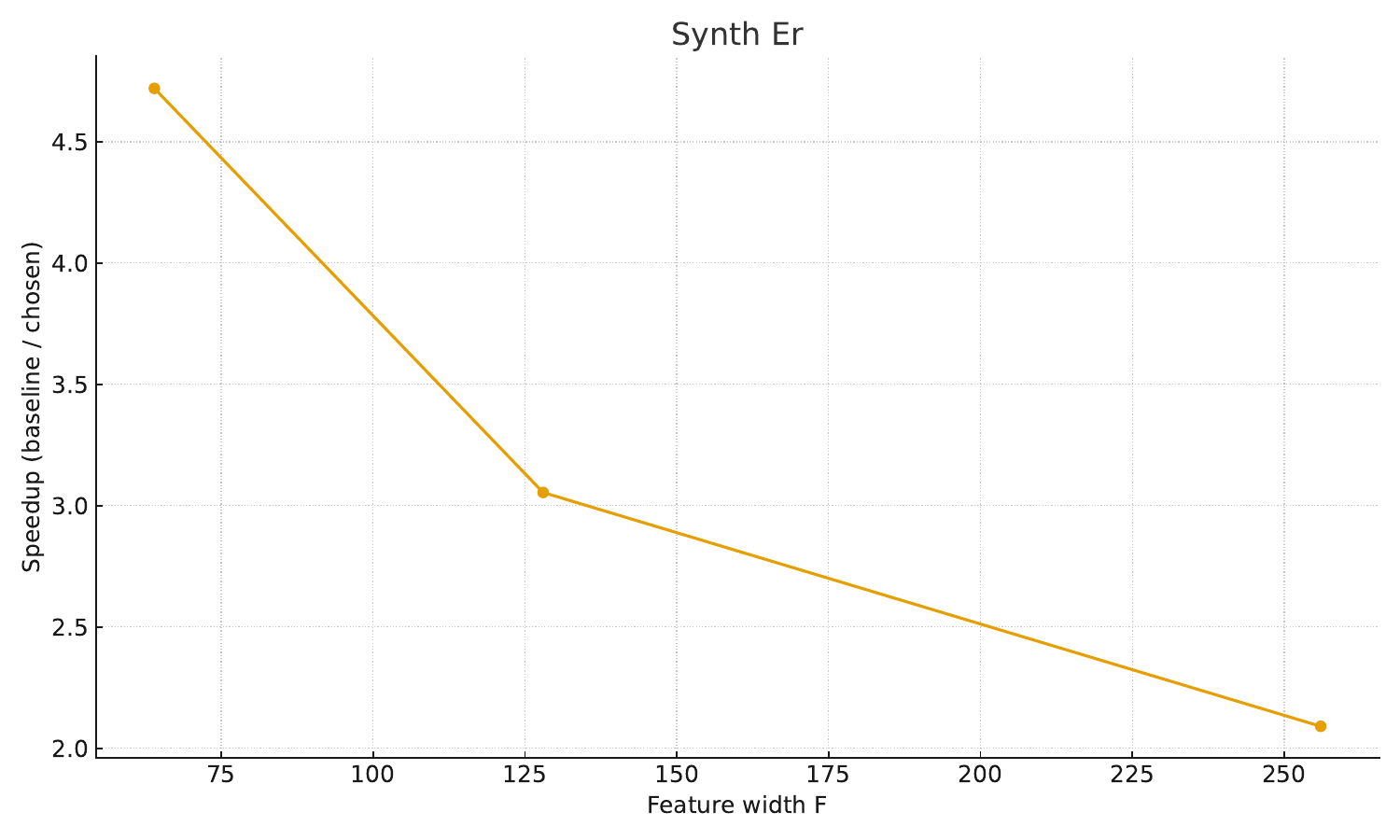}
  \caption{Synthetic ER speedups.}
  \label{fig:synth-er}
\end{figure}
\begin{figure}[H]
  \centering
  \maybeincludegraphics[width=.7\linewidth]{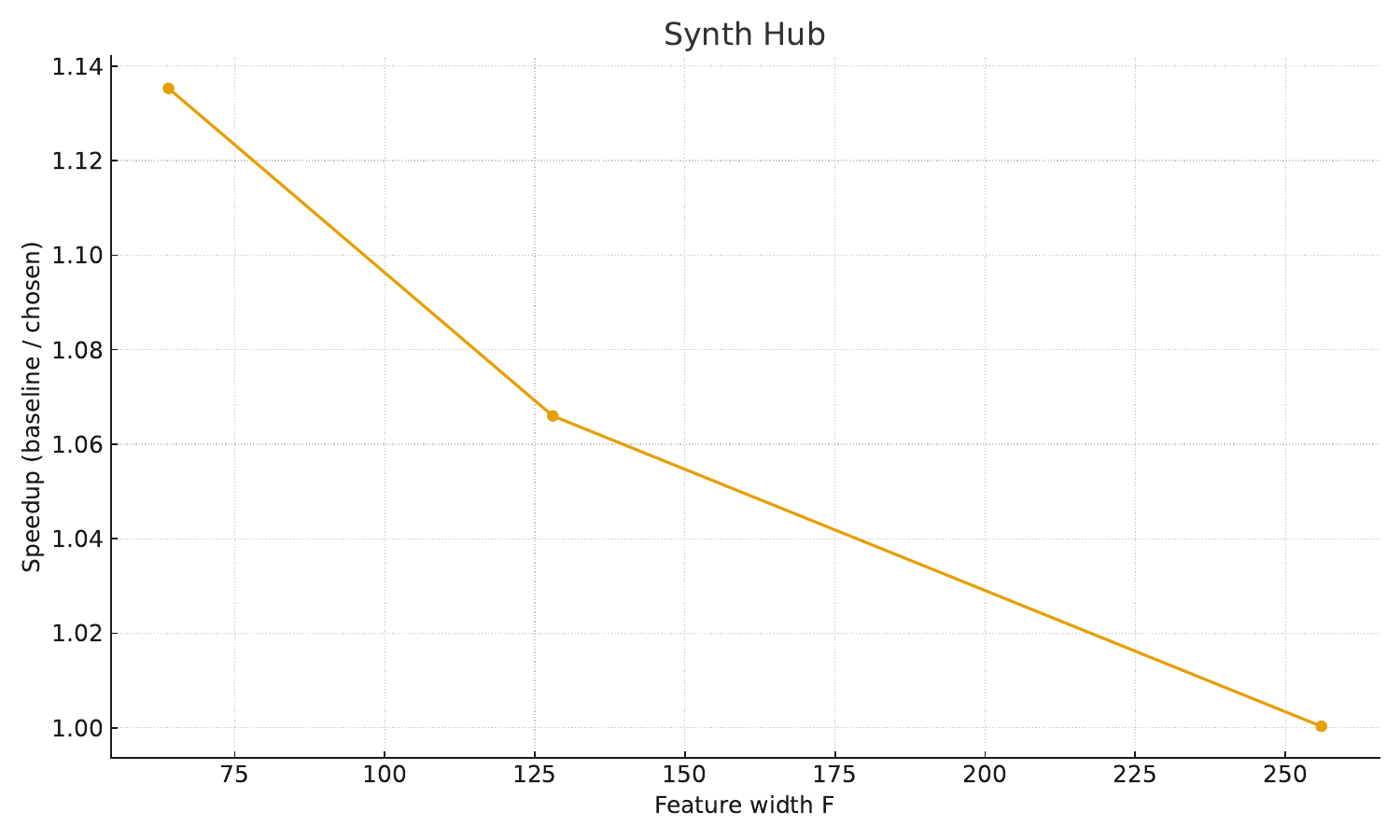}
  \caption{Hub-skew synthetic speedups.}
  \label{fig:synth-hub}
\end{figure}

\section{Discussion}
SpMM becomes bandwidth-bound at larger $F$, explaining parity with vendor kernels; the scheduler correctly defers to the baseline in that regime. Probe overhead matters only before cache warm-up; steady-state replay is near-zero overhead. Extensions include \texttt{vec8}, mixed precision (FP16/BF16 reads with FP32 accumulators), and multi-GPU partitioning.

\section{Reproducibility}
We release sources, Python bindings, the harness, schedule cache/logs, and a replay mode. Each CSV has a \texttt{.meta.json} sidecar with GPU/SM, Torch/CUDA versions, and env vars. Scripts regenerate all tables/figures.

\section{Limitations}
Operator-level scheduling does not change model semantics. Worst-case regressions are bounded by the guardrail via fallback. Wins are largest on synthetics and small-$F$ regimes; more end-to-end gains remain future work.

\section{Threats to Validity}
\textbf{External validity.} Results on Reddit/Products may not generalize to bipartite recommenders. \textbf{Construct validity.} SDDMM baseline is a gather--dot; vendor-tuned alternatives could change gaps. \textbf{Internal validity.} Minor driver/CUDA updates can shift baselines; our cache schema encodes device/toolchain minors to avoid stale reuse. Probe noise is bounded by the guardrail and identical sampling.

\section{Conclusion}
AutoSAGE brings input-aware scheduling to sparse GNN aggregation with probe, cache, and replay. Across SpMM and SDDMM (and CSR attention), the scheduler selects strong kernels and avoids regressions, with wins under sparsity/skew and parity otherwise.


\begin{thebibliography}{99}
\bibitem{ge-spmm} G. Huang, G. Dai, Y. Wang, H. Yang. \emph{GE-SpMM: General-purpose Sparse Matrix-Matrix Multiplication on GPUs for Graph Neural Networks}. arXiv:2007.03179, 2020.
\bibitem{fusedmm} M. K. Rahman, M. H. Sujon, A. Azad. \emph{FusedMM: A Unified SDDMM-SpMM Kernel for Graph Embedding and Graph Neural Networks}. arXiv:2011.06391, 2020.
\bibitem{featgraph} Y. Hu, Y. Huang, S. Li, Z. Jia, Z. Zhang, G.-Y. Wei, D. Brooks, Z. Zhu, Z. Zhang. \emph{FeatGraph: A Flexible and Efficient Backend for Graph Neural Networks}. SC'20, 2020.
\bibitem{gnnadvisor} Y. Wang, Z. Li, Q. Chen, S. Jiang, J. Xue. \emph{GNNAdvisor: An Adaptive and Efficient Runtime System for GNN Acceleration on GPUs}. OSDI'21, 2021.
\bibitem{adaptgear} Y. Zhou, J. Xiang, Z. Guo, Y. Sun, B. Cui. \emph{AdaptGear: Accelerating GNN Training via Adaptive Subgraph-Level Kernels on GPUs}. SIGMOD'23 / arXiv:2305.17408, 2023.
\bibitem{pyg} M. Fey, J. E. Lenssen. \emph{Fast Graph Representation Learning with PyTorch Geometric}. ICLR Workshop, 2019.
\bibitem{dgl} M. Wang et al. \emph{Deep Graph Library: A Graph-Centric, Highly-Performant Package for GNN}. arXiv:1909.01315, 2019.
\bibitem{ogb} W. Hu et al. \emph{Open Graph Benchmark: Datasets for Machine Learning on Graphs}. NeurIPS, 2020.
\bibitem{autotvm} T. Chen et al. \emph{Learning to Optimize Tensor Programs}. NeurIPS, 2018.
\bibitem{ansor} L. Zheng et al. \emph{Ansor: Generating High-Performance Tensor Programs for Deep Learning}. OSDI, 2020.
\bibitem{triton} P. Tillet et al. \emph{Triton: An Intermediate Language and Compiler for Tiled Neural Network Computations}. arXiv:1901.01986, 2019.
\bibitem{graphit18} Y. Zhang, M. Yang, R. Baghdadi, S. Kamil, J. Shun, S. Amarasinghe. \emph{GraphIt: A High-Performance Graph DSL}. Proc. ACM Program. Lang. (OOPSLA), 2(\#OOPSLA), Article 121, 2018.
\bibitem{g2-graphit21} A. Brahmakshatriya, Y. Zhang, C. Hong, S. Kamil, J. Shun, S. Amarasinghe. \emph{Compiling Graph Applications for GPUs with GraphIt}. In \textit{Proc. IEEE/ACM Int’l Symposium on Code Generation and Optimization (CGO)}, 2021.
\bibitem{sputnik} T. Gale et al. \emph{Sparse GPU Kernels for Deep Learning}. arXiv:2006.10901, 2020.
\bibitem{graphblast22} C. Yang, A. Buluç, J. D. Owens. \emph{GraphBLAST: A High-Performance Linear Algebra-based Graph Framework on the GPU}. ACM TOMS 48(1):1:1–1:51, 2022.
\bibitem{dtcspmm24} R. Fan, W. Wang, X. Chu. \emph{DTC-SpMM: Bridging the Gap in Accelerating General SpMM with Tensor Cores}. ASPLOS’24, 2024.
\bibitem{voltrix25} Y. Xia et al. \emph{Voltrix-SpMM: Sparse Matrix-Matrix Multiplication on Tensor Cores}. USENIX ATC’25, 2025.
\bibitem{hrspmm25} Q. Wang et al. \emph{HR-SpMM: Adaptive Row Partitioning and Hybrid Kernel Design for Sparse Matrix Multiplication}. ICS’25, 2025.
\bibitem{nextdoor21} A. Jangda, S. Polisetty, A. Guha, M. Serafini. \emph{NextDoor: Accelerating Graph Sampling on GPUs}. EuroSys’21.
\bibitem{bgl23} T. Liu et al. \emph{BGL: GPU-Efficient GNN Training by Optimizing Graph Data I/O and Preprocessing}. NSDI’23.
\bibitem{pipegcn22} C. Wan et al. \emph{PipeGCN: Efficient Full-Graph Training of GCNs with Pipelined Feature Communication}. ICLR’22.
\bibitem{tcgnn23} Y. Wang et al. \emph{TC-GNN: Bridging Sparse GNN Computation and Dense Tensor Cores on GPUs}. USENIX ATC’23.
\bibitem{graphiler22} Z. Xie et al. \emph{Graphiler: Optimizing GNNs with Message Passing Data Flow Graph}. MLSys, 2022.
\bibitem{vldb24-eval} H. Yuan et al. \emph{A Comprehensive Evaluation of GNN Training Systems}. PVLDB 17(7):1241--1254, 2024.
\bibitem{gnnsys-survey} S. Bajaj et al. \emph{A Performance Comparison of Full-Graph and Mini-Batch GNN Training Systems}. PVLDB, 2025.
\bibitem{you-designspace}
J. You, R. Ying, J. Leskovec.
\emph{Design Space for Graph Neural Networks}.
Advances in Neural Information Processing Systems (NeurIPS), 2020.
\end{thebibliography}
\end{document}